\DeclareMathOperator{\logme}{logME}
\DeclareMathOperator{\totlogme}{TLogME}
\DeclareMathOperator{\transrate}{TrR}
\DeclareMathOperator{\pooling}{Pooling}
\DeclareMathOperator{\avgpool}{AveragePooling}
\DeclareMathOperator{\roialign}{ROI-Align}
\newtheorem{proposition}{Proposition}[section]
\title{Transferability Metrics for Object Detection}
\author{
Louis Fouquet$^1$
\and
Simona Maggio$^1$\And
Léo Dreyfus-Schmidt$^{1}$
\affiliations
$^1$Dataiku Lab\\
\emails
\{name, surname\}@dataiku.com
}
\begin{document}

\maketitle

\begin{abstract}
    Transfer learning aims to make the most of existing pre-trained models to achieve better performance on a new task in limited data scenarios. 
    However, it is unclear which models will perform best on which task, and it is prohibitively expensive to try all possible combinations.
    If transferability estimation offers a computation-efficient approach to evaluate the generalisation ability of models, prior works focused exclusively on classification settings.
    To overcome this limitation, we extend transferability metrics to object detection. 
    We design a simple method to extract local features corresponding to each object within an image using ROI-Align.  
    We also introduce $\totlogme$, a transferability metric taking into account the coordinates regression task.
    In our experiments, we compare $\totlogme$ to state-of-the-art metrics in the estimation of transfer performance 
    of the Faster-RCNN object detector. We evaluate all metrics on source and target selection tasks, 
    for real and synthetic datasets, and with different backbone architectures.
    We show that, over different tasks, $\totlogme$ using the local extraction method provides a 
    robust correlation with transfer performance and outperforms other transferability metrics on local and global level features.
\end{abstract}

\section{Introduction}\label{intro}

In real-world industrial vision tasks, labeled data is scarce, and training models from scratch can be excessively costly and time-consuming.
Thus, transfer learning is commonly used to take advantage of pre-trained models.
As several pre-trained models and many large datasets for pre-training are available, 
Transferability Metrics (TM) have been designed to estimate the model's ability to generalize to the end task
and proceed to model selection with limited computational capacities by avoiding directly transferring all the models.

Existing works for classification tasks present different transferability metrics based on scores between features extracted from pre-trained models and labels of a target dataset. They can be used either to select the best
source or target dataset (source and target selection) in a transfer learning setting, but also to select the best architecture within a model zoo (model selection) or even the optimal number of layers to transfer (layer selection).
These metrics are free of optimization and only need access to a pre-trained model to compute pairs of feature-labels.
However, for Object Detection (OD) tasks, each image may contain multiple objects with both regression and classification tasks. Feature-label pairs cannot thus be easily defined and no work has yet been done on estimating  
transferability in this setting.

To solve this problem, we design a method to extract local features associated with each object using the ROI-Align pooling proposed in the Mask-RCNN architecture \cite{he2017maskrcnn}.
We then adapt state-of-the-art transferability metrics by generating again local-features-label pairs for each object as in an image classification setting.
In addition, we propose a novel version of $\logme$ \cite{you2021logme} to take into account both the classification and the regression tasks, $\totlogme$.

We tested our approach on both synthetic datasets and real-life datasets.
We investigated especially the source and target selection tasks with a traditional Faster-RCNN \cite{ren2015fasterrcnn} architecture with a ResNet \cite{he2016resnet} backbone and a more recent architecture \cite{li2021benchmarking} adapting a Faster-RCNN to a Vision Transformer (ViT) backbone.

We summarize our contributions as follows:

\begin{figure}[t]
    \includegraphics[width=\columnwidth]{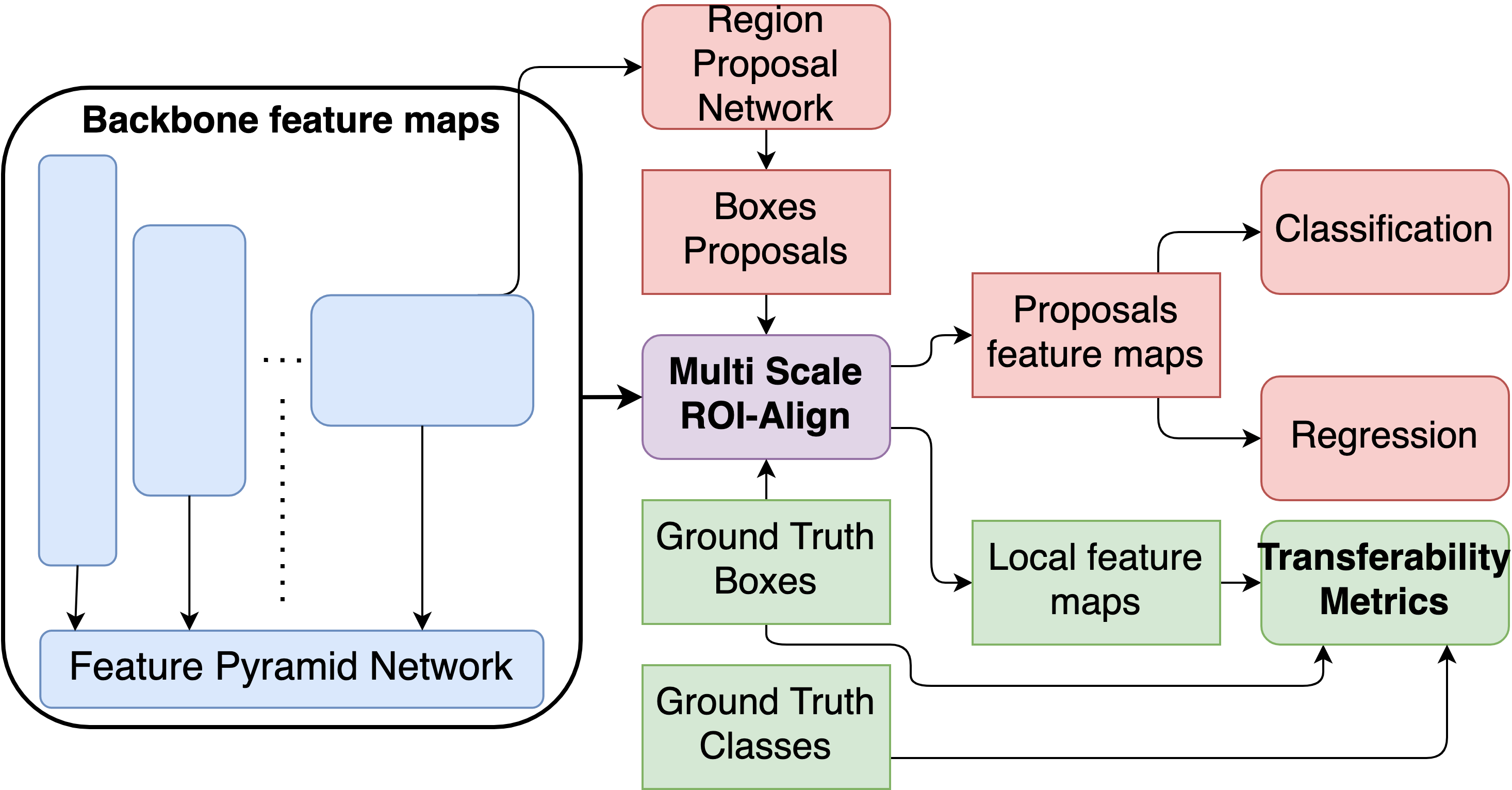}
    \caption{Overview of our method for extracting local features. The blocks of the Faster-RCNN architecture in red are bypassed, 
    while the flow in green enables the computation of transferability metrics on object-level features.}
    \label{fig:flow}
\end{figure}

\begin{enumerate}
    \item We propose to use efficient transferability metrics in an object-detection setting to evaluate model generalization.
    
    \item We introduce the $\totlogme$ transferability metric, an extension of $\logme$~\cite{you2021logme} 
    on both labels and bounding box coordinates, 
    and show it is upper bounded by the average log-likelihood of the optimal model.%
    
    \item We present a method to extract object-level features based on ROI-Align that enables using
    state-of-the-art transferability metrics for object detection and outperforms a simple baseline, 
    based on image-level representations from backbone layers.
    
    \item We analyze several transferability metrics on real and synthetic transfer scenarios and show empirically
    that $\totlogme$ with local feature extraction is a robust baseline
    for different tasks, especially with a constrained training budget.
\end{enumerate}

We provide in Section~\ref{sec:related_works} an overview of the prior works, whereas
Section~\ref{sec:tm_od} details the transferability metrics that we propose for OD,   
Section~\ref{sec:experiments} summarizes our empirical investigation, while Section~\ref{sec:conclusion} concludes the paper.

\section{Related Works}\label{sec:related_works}

\paragraph*{Transfer Learning}

The vast research topic of Transfer learning~\cite{thrun1998transferlearning} refers to transferring knowledge 
from a source task to a target task having in general a different output space. 

Here, we consider only inductive transfer, or fine-tuning~\cite{yosinski2014transferability}, which transfers a pre-trained model to a different target task, 
reusing or slightly adapting the pre-trained model as a feature extractor. 
Although there are various finetuning strategies~\cite{li2018inductive,gururangan2020pretraining,garg2020pretraining},
in this paper, we focus on a simple fine-tuning setting.

\paragraph*{Task Relatedness}
Empirical methods to assess task relatedness were historically developed. 
Taskonomy~\cite{zamir2018taskonomy} built a taxonomy of tasks by empirically computing transferability between different tasks.
Task2Vec~\cite{achille2019task2vec} proposed to use the Fisher Information Matrix of a probe network fine-tuned on the target task to embed the latter. 
However, these methods require model training and are thus computationally expensive.

\paragraph*{Datasets Similarity}
Task Similarity has also been investigated through distance metrics such as $\mathcal{A}$-distance~\cite{ben2006analysis}, maximum mean discrepancy~\cite{gretton2012kernel}. 
or Wasserstein distance between features of source and target~\cite{cui2018large}.
But these methods lacked asymmetry, which is an intrinsic property of transferability, and did not take advantage of the labeled nature of the data. 
In OTDD~\cite{alvarez2020geometric} the authors proposed to also compute a Wasserstein label-to-label distance.
And, in OTCE~\cite{tan2021otce}, the similarity is a combination of a Wasserstein distance on features and a conditional entropy between source and target labels 
with their joint distribution modeled through optimal coupling.
Despite their simplicity, these methods need access to the source data used to pre-train the model, which is rarely available in real-world applications.
In RSA~\cite{dwivedi2019representation} and DEPARA~\cite{song2020depara}, only access to the pre-trained model is necessary and the authors proposed to use representation similarity 
analysis and graph similarity respectively to measure transferability.
They are unfortunately computationally comparable to fine-tuning on the target dataset~\cite{huang2022frustratingly} which makes them hard to use in practice.

\paragraph*{Transferability Metrics}
Other analytical metrics~\cite{tran2019transferability,nguyen2020leep,you2021logme,bao2019information,huang2022frustratingly} are more aligned with our goal of proposing a computationally fast method that does not need retraining nor access to source data.
NCE~\cite{tran2019transferability} proposed to compute conditional entropy between soft labels (associated with the source task) and true labels of the target dataset. 
LEEP~\cite{nguyen2020leep} replaced the conditional entropy with an empirical predictor between soft and true labels
and its extension $\mathcal{N}$LEEP~\cite{li2021ranking} proposed to better estimate soft labels using Gaussian Mixture Models on PCA-reduced features at an extra computational cost.
H-score~\cite{bao2019information} used an information theoretic approach to estimate transferability but suffered from instability for high dimensional data which was corrected 
using Ledoit-Wolf estimator~\cite{ibrahim2021newer}.
In~\cite{you2021logme}, authors proposed to compute the maximum evidence of a linear model, using a Gaussian prior on weights, between features extracted from the pre-trained models and 
associated target labels.
We highlight that this last method is the only one designed for regression and not classification.
A more recent work proposed TransRate~\cite{huang2022frustratingly} that computes mutual information between extracted features and target labels. \\
Despite having the desirable properties that we aim for, these metrics cannot be used directly in an OD setting where images have more than one label 
and that combines both classification and regression tasks.

\paragraph*{Object Detection}

After pioneer work on object detectors,
two types of deep learning detectors have been introduced: two-stage detectors (such as Faster-RCNN~\cite{ren2015fasterrcnn}),
and single-stage detectors such as YOLO~\cite{redmon2016yolo,bochkovskiy2020yolov4}).

In this paper, we focus on Faster-RCNN and its improvements, as this architecture exhibits high accuracy on small objects, and is the most popular choice in industrial contexts, 
where real-time speed is not needed.
An important improvement of Faster-RCNN is the Feature Pyramid Network (FPN) [45], building a feature pyramid
at multiple levels to improve the detection of small objects.
Mask R-CNN~\cite{he2017maskrcnn} extends on the Faster R-CNN by adding a segmentation branch and introducing the RoIAlign layer, 
to avoid localization misalignment due to spatial quantization.

Finally, very recently the first attempts at replacing CNNs with Transformers appeared for both two-stage detectors, 
with Faster-RCNN-ViT~\cite{li2021benchmarking}, 
and single-stage detectors, with the Swin Transformer~\cite{liu2021swin}, 
achieving state-of-the-art performances, but using a higher number of parameters.

More improvements and new models have been proposed, especially for resource-constrained environments: we refer to~\cite{zaidi2022survey} for an in-depth survey of object detectors.

\section{Transferability Metrics for Object Detection}\label{sec:tm_od}

\subsubsection{Notations}
Let $\mathcal{D}=\{(\mathbf{x}, \mathbf{y}, \mathbf{t})\}$ be a dataset where $\mathbf{t}$, $y$ and $\mathbf{x}$ 
are an object's location, its class, and the image where it is present.
A model $\mathcal{M}$, pre-trained on a source task $\mathcal{T}_S$, is transferred to the target task $\mathcal{T}_T$,
learning the model $\mathcal{M}_T$ on $\mathcal{D}$.
\newline
\newline
\begin{tabular}{cp{0.6\textwidth}}
    $F$ & feature matrix of target samples \\
    & extracted from $\mathcal{M}$\\
    $\phi_l$ & global (image-level) feature extractor at the $l$-th \\
    & layer of Faster-RCNN backbone \\
    $\varphi_l$ & local (object-level) feature extractor at the $l$-th \\
    & layer of Faster-RCNN head \\
    $f_l$ & feature extractor at the object level \\
\end{tabular}\\

\subsection{Problem Setting}

Let $\mathcal{D}=\{(\mathbf{x}_i, y_i, \mathbf{t}_i)\}_{i=1}^n$ be the target dataset of the transfer, 
containing overall $n$ objects, 
each represented by $\mathbf{x}_i$, the image containing it.
OD is thus a multi-task prediction problem, 
aiming at estimating for any object $i$ its position as bounding boxes coordinates 
$\mathbf{t}_i=\{t_x, t_y, t_w, t_h\}_i$, and its class $y_i$ among $K$ different classes of objects.

The standard evaluation metric for OD models is the $\mathrm{mAP}@\mathrm{IOU}_{0.05:0.95}$~\cite{lin2014microsoft}, defined as the area under the 
precision-recall curve averaged across classes and various levels of Intersection Over Union (IOU)\footnote{\url{https://cocodataset.org/\#detection-eval}}. 
In the following, we call this metric $\mathrm{mAP}$ for simplicity. 

The goal of the Transferability Metric in OD is to approximate the ground-truth transfer performance $\mathrm{mAP}(\mathcal{M}_T)$, 
estimating the compatibility of the features of the objects in the target dataset $\mathcal{D}$ extracted from the pre-trained model, 
$\{f_i\}_{i=1}^n$, and the multi-task ground-truth labels $\{y_i\}_{i=1}^n$ and $\{\mathbf{t}_i\}_{i=1}^n$.
The underlying assumption is that the compatibility of pre-trained features and target labels serves as a 
a strong indicator for the performance of the transferred model.

\subsection{Performance of Transferability Metrics}

For $M$ transfer scenarios 
$\{\mathcal{T}_{Sm} \rightarrow \mathcal{T}_{Tm}\}_{m=1}^M$  
an ideal TM would produce scores $\{s_m\}_{m=1}^M$ ranking perfectly the transfers ground-truth performances 
$\{\mathrm{mAP}(\mathcal{M}_T)_m\}_{m=1}^M$.
Therefore, following previous works~\cite{you2021logme,huang2022frustratingly}, we use Pearson's linear correlation, Spearman's and Kendall's rank correlation between  
$\{\mathrm{mAP}(\mathcal{M}_T)_m\}_{m=1}^M$ and $\{s_m\}_{m=1}^M$ to assess the quality of TMs.
In the experimental results (Section \ref{sec:experiments}), we display Pearson's correlation for brevity but all metrics are available in the Supplementary Materials (Section ~\ref{sec:supp_mat}).

\subsection{Object-Level Feature Extraction}\label{subsec:feature_extraction}

As stated before, we focus here on the Faster-RCNN \cite{ren2015fasterrcnn}, 
both in its original form based on ResNet backbone and in its latest variant based on ViT \cite{li2021benchmarking}.
We consider a transfer protocol, where the head only is retrained, providing the finetuned model $\mathcal{M}_T$.

Let $f_l$ be a \textit{local} feature extractor, defined by the operations leading to the feature representation of an object 
at layer $l$ of the pre-trained model $\mathcal{M}$. Depending on the position of $l$ in the network, these operations 
include either convolution at the image level followed by pooling at the object level, or convolutions and dense 
operations directly at the object-level.

In OD, the extraction of features to use in TM is not trivial as in classification, as local features 
related to a ground truth bounding box cannot be retrieved by a simple feedforward run of the network.
We propose several local feature extraction options and show empirically that the choice of the method 
is critical for the success of TM.

The Faster-RCNN model $\mathcal{M}$ is composed of $L$ backbone layer blocks, 
$\phi_L \circ \cdots \circ \phi_{1}$.
The $L$ backbone layers are also connected to a Feature Pyramid Network (FPN)\cite{lin2017feature}, 
with layer blocks $\phi_{\mathrm{FPN}, 1}$ to $\phi_{\mathrm{FPN}, L}$. 

These backbone layers are connected to the head, containing a Region Proposal Network (RPN) 
with the role of detecting objects (i.e. passing bounding box proposals). In the context of TM the RPN needs to be bypassed 
as we want to compute features for already known ground truth object locations, as illustrated in Figure \ref{fig:flow}. 

Given an object bounding box $\mathbf{t}_i$ and a global feature map $\phi_l(\mathbf{x}_i)$ at layer $l$, 
a pooling operation extracts an object-level feature map $
f_l(\mathbf{x}_i, \mathbf{t}_i) = \pooling(\phi_l(\mathbf{x}_i), \mathbf{t}_i)$, 
that we can use as a local representation. 

Finally, the network applies convolutional and 
dense layers $\varphi$ to the previous feature map at the last layer $L$ to produce the final object representations, 
$f_l(\mathbf{x}_i, \mathbf{t}_i) = \varphi(\pooling(\phi_L(\mathbf{x}_i), \mathbf{t}_i))$,
eventually processed by the end classifier and regressor. 

\subsubsection{Global features as a Baseline}

As a simple baseline, we can use a global feature map $\phi_l(\mathbf{x}_i)$ from layer $l$ as features for each object $i$ present in the image.
All the objects present in the image have identical features, given by the feature extractor $f^B_l$ which is the average pooling of $\phi_l(\mathbf{x}_i)$ over the spatial dimension:
\begin{align}
    f^B_l(\mathbf{x}_i)=\avgpool(\phi_l(\mathbf{x}_i))
\end{align}
Similarly, we define as $f^B_{\mathrm{FPN}\, l}$ the global feature extractor based on the $l$-th block of FPN.
For this baseline $f_l(\mathbf{x}_i, \mathbf{t}_i)=f_l(\mathbf{x}_i)$ as the object location is ignored.

\subsubsection{Pooling local features by ROI-Align}

Let $\phi_l$ be a global feature extractor from layer $l$, and 
$\mathbf{t}=(t_x, t_y, t_w, t_h)$ a bounding box of an object on the original image. 
The scale ratio $s$ is the ratio between the input image size and the feature map size.

The object-level feature extractors applying ROI-Align at scale $s$ and layer $l$ are defined as follows:
\begin{align}
    f^{\mathrm{ROI}}_l(\mathbf{x}_i, \mathbf{t}_i)=\roialign(\phi_l(\mathbf{x}_i), \mathbf{t}_i, s)
\end{align}
where the ROI-Align pooling operation results in a feature map of a reduced size given by the pooling dimension.
Similarly, the feature extractor based on the $l$-th block of FPN is defined as $f^{\mathrm{ROI}}_{\mathrm{FPN},l}(\mathbf{x}_i, \mathbf{t}_i)=\roialign(\phi_{\mathrm{FPN},l}(\mathbf{x}_i),\mathbf{t}_i, s)$.

\subsubsection{Pooling global features by Multi-Scale ROI-Align}

With feature maps from different layers $\phi_1,\cdots,\phi_L$ and thus different scale ratios $s_1, \cdots, s_L$ we can define 
the Multi-Scale ROI-Align~\cite{lin2017feature} object-level features, which intuitively select the most appropriate layer
depending on the size of the object: 
\begin{align}
    f^{\mathrm{MS}}(\mathbf{x}_i, \mathbf{t}_i) = \roialign(\phi_{\mathrm{FPN},k}(\mathbf{x}_i),\mathbf{t}_i, s_k)
\end{align}
with $k = \lfloor k_0 + log_2 (\sqrt{t_w\cdot t_h}/s_0) \rfloor$, where $k_0$ and $s_0$ are canonical layer and scale, 
ensuring the last layer is selected for an object of the size of the whole image. 

\subsubsection{Local features from dense layers}

Instead of using the local features obtained by pooling, we can forward the local representations
from the last backbone layer
through the final $G$ convolutional and dense layers of the network operating at the object level, 
$\varphi = \varphi_G \circ \cdots \circ \varphi_{1}$. This produces different local features depending on the layer: 
\begin{align}
    f^{FC}_l(\mathbf{x}_i, \mathbf{t}_i) = \varphi(\pooling(\phi_L(\mathbf{x}_i)), \mathbf{t}_i)
\end{align}
In particular, we call the feature from the penultimate dense layer $f^{FC}_{-1}$.

\subsection{Transferability Metrics for Classification}\label{subsec:background}

State-of-the-art TMs are defined for a classification task as compatibility measures between the feature matrix $F$ of the \textit{target samples}
extracted from the \textit{pre-trained model} and the labels of the target samples $y$. 
The features are usually extracted from the penultimate layer of the network but different schemes have been evaluated.

\subsubsection{LogME}

The logarithm of maximum evidence~\cite{you2021logme}, $\logme$, is defined as:
\begin{align}
    \logme(y, F) = \max\limits_{\alpha, \beta} \log p(y|F, \alpha, \beta)
\end{align}%
with $p(y|F, \alpha, \beta) = \int p(w|\alpha) p(y|F, \beta, w) dw$, 
where $\alpha$ and $\beta$ parametrize the prior distribution of weights $w\sim \mathcal{N}(0,\alpha^{-1}I)$, 
and the distribution of each observation $p(y_i|f_i,w,\beta)=\mathcal{N}(y_i|w^T f_i,\beta^{-1})$.
The $\logme$ has been designed for regression problems and is extended to $K$-class classification by considering each class 
as a binary variable on which the $\logme$ can be computed and finally averaging the $K$ scores. 

\subsubsection{H-Score}
The H-score~\cite{bao2019information} is defined as:
\begin{align}
    \mathcal{H}(y, F) = tr(\Sigma^{(F)^{-1}}\Sigma^{(z)})
\end{align}
where $\Sigma^{(F)}$ is the feature covariance matrix, 
and $\Sigma^{(z)}$ is the covariance matrix of the target-conditioned feature matrix $z=E[F|y]$.
However, as estimating covariance matrices can suffer from instability for high dimensional data, a shrinkage-based H-score has been proposed. 
This regularized H-score~\cite{ibrahim2021newer} is defined as: 
\begin{align}
    \mathcal{H}_{\alpha}(y, F) = tr(\Sigma^{(F)^{-1}}_{\alpha} (1 - \alpha)\Sigma^{(z)})
\end{align}
where $\Sigma^{(F)}_{\alpha}$ is 
the regularized covariance matrix of shrinkage parameter $\alpha$ computed using the Ledoit-Wolf estimator. 

\begin{table*}[h!]
    \centering
    \begin{tabular}{lrr|rr|rrrrr}
        \toprule
        {} &     & &  \multicolumn{2}{c|}{Global Features} & \multicolumn{5}{c}{Local Features}\\
        \midrule
        {} &     & & Best Metric & Score & LogME & TLogME & $\mathcal{H}$ & $\mathcal{H}_{\alpha}$ & $\transrate$  \\
         Datasets & Task    & Backbone &  &             &         &          &           \\
         \midrule
         Synthetic & Source & ResNet    & $\mathcal{H}$  &      0.42\textsuperscript{ } &    \textbf{0.50}\textsuperscript{ } &       \textbf{0.50} &    0.36\textsuperscript{ } &     0.49\textsuperscript{ } &      0.21\textsuperscript{ } \\
         Synthetic & Target & ResNet     & $\mathcal{H}_\alpha$  &      0.41\textsuperscript{ } &    0.32\textsuperscript{ } &        0.35 &    -0.22\textsuperscript{ } &     \textbf{0.42}\textsuperscript{ } &      0.12\textsuperscript{ } \\
         Real 1 & Target  & ResNet   & $\logme$    &      0.00\textsuperscript{*} &  0.10\textsuperscript{*} &       \textbf{0.47} &   -0.01\textsuperscript{*} &    -0.03\textsuperscript{*} &     -0.68\textsuperscript{ } \\
         Real 2  & Target   & ResNet  &$\totlogme$    &       0.33\textsuperscript{ } &     0.15\textsuperscript{*} &        0.31 &   -0.20\textsuperscript{ } &    -0.24\textsuperscript{ } &      \textbf{0.43}\textsuperscript{ } \\
         Real 2 & Target & ViT  & $\transrate$      &     0.54\textsuperscript{ } &  \textbf{0.56}\textsuperscript{ }  &       \textbf{0.56} &    0.04\textsuperscript{*} &     0.04\textsuperscript{*} &      0.50\textsuperscript{ } \\
         \bottomrule
        \end{tabular}
    \caption{Correlation between mAP and transferability metrics for different tasks. For synthetic datasets the correlation is the mean of correlations. Asterisks (*) indicate non significant p-values.}
    \label{tab:summary}
\end{table*}

\subsubsection{TransRate}

TransRate~\cite{huang2022frustratingly} is defined as: 
\begin{align}
    \transrate(y,F) = h(F) - h(F|y)\approx H(F^\Delta) - H(F^\Delta|y)
\end{align}
where $h$ is the differential entropy of a 
continuous random variable, $H$ is the entropy of a discrete random variable and $\Delta$ is the quantization error 
of the quantized features $F^\Delta$.

\subsection{Extension of Transferability Metrics}

Once defined the object-level feature maps, $f_l(\mathbf{x}_i, \mathbf{t}_i)$, with $f_l$ a feature extractor among $f^B_l$, $f^B_{\mathrm{FPN}, l}$, $f^{\mathrm{ROI}}_l$, $f^{\mathrm{ROI}}_{\mathrm{FPN}, l}$, 
$f^{\mathrm{MS}}$ or $f^{FC}_l$, 
we can plug them as the feature matrix $F$ in the scores defined in Subsection \ref{subsec:background}
to obtain TMs for the OD task.

For the particular case of $\logme$ we can compute an additional score evaluating the 
compatibility of features with the bounding box target $\mathbf{t}$. Averaging both the classification and the regression score, 
we obtain a total $\logme$ taking into account the multi-task nature of the OD problem.

Extending other transferability metrics to take into account the coordinate regression task would involve an arbitrary discretization 
of the bounding boxes' coordinates. In addition, both $\transrate$ and $\mathcal{H}$ have undesirable properties (i.e. being null) in pure detection cases
where there is only one class. We thus decided to focus in this work only on the extension of $\logme$.

\subsubsection{Total LogME}

For the compatibility of features with the position $\mathbf{t}=\{t_k\}_{k=1}^4$, we can average along the tasks to 
build the $\logme$ specific to the regression part:
\begin{equation}
    \logme_{pos}(\mathbf{t},F) = \frac{1}{4}\sum_{k=1}^4 \max\limits_{\alpha, \beta} \log p(t^k|F, \alpha, \beta)
\end{equation}
One-hot encoding $y$ and expanding in one dimension, $y^k$, per class, we have a $\logme$ specific for classification, 
averaging along each dimension:
\begin{equation}
    \logme_{class}(y,F) = \frac{1}{K}\sum_{k=1}^K \max\limits_{\alpha, \beta} \log p(y^k|F, \alpha, \beta)
\end{equation}
Eventually, we can average $\logme_{pos}$ and $\logme_{class}$ to have a unique score, taking into account both the 
prediction of the position and the class of objects:
\begin{equation}
    \totlogme(y,\mathbf{t}, F) = \logme_{pos} + \logme_{class}
\end{equation}

We empirically show that $\totlogme$ outperforms other TMs for the OD task.

\subsubsection{Theoretical Properties of Total LogME}\label{theory}

Similarly to previous work~\cite{huang2022frustratingly} we show that $\logme$ and $\totlogme$ provide a lower bound of the optimal log-likelihood,
which is closely related to the transfer performance, ensuring these metrics can be used as its proxy.

In particular, for tractability, we focus on the performance of the optimal transferred model $w^*$, when considering a pre-trained 
feature extractor that is not further finetuned~\cite{huang2022frustratingly}. 
This is close to our setting where we transfer 
the model head only.

Following~\cite{you2021logme} in a classification setting, we have that $\mathcal{L}(F, w^*)= \log p(y | F, w^*) $.
To extend this result to $\totlogme$, we consider the total likelihood $\mathcal{L}_{tot}(F, w^*) = \log p(y, \mathbf{t} | F, w^*)$.

\begin{proposition}
$\logme$ and $\totlogme$ provide a lower bound of the optimal log-likelihood: $\mathcal{L}(F, w^*) \geq \logme(y, F)$ 
and $\mathcal{L}_{tot}(F, w^*) \geq \totlogme(y, \mathbf{t}, F)$, assuming that object class $y$ and location $\mathbf{t}$ 
are independent conditionally on $w, F$.
\end{proposition}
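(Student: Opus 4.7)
My plan is to exploit the standard observation that the evidence (marginal likelihood) is a convex combination of per-weight likelihoods under the prior, and therefore is bounded above by the maximum likelihood over the weight space. This handles the classification bound directly, and the $\totlogme$ bound then follows from an additive decomposition under the conditional independence assumption.

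First I would establish the single-task bound $\mathcal{L}(F, w^*) \geq \logme(y, F)$. Starting from
\begin{equation*}
p(y \mid F, \alpha, \beta) = \int p(w \mid \alpha)\, p(y \mid F, \beta, w)\, dw,
\end{equation*}
and using that $p(w \mid \alpha)$ is a probability density integrating to one, the evidence is a weighted average of $p(y \mid F, \beta, w)$ over $w$, hence bounded above by $\max_w p(y \mid F, \beta, w)$. Taking logarithms and maximizing over $\alpha, \beta$ gives
\begin{equation*}
\logme(y, F) \;\leq\; \max_{\beta, w} \log p(y \mid F, \beta, w) \;=\; \mathcal{L}(F, w^*),
\end{equation*}
where $w^*$ denotes the joint likelihood-maximizer (including the noise precision). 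Because $\logme_{class}$ and $\logme_{pos}$ are defined as averages over binary one-vs-rest sub-problems and over the four coordinate regressions respectively, I would apply this inequality component-by-component and then average, taking $\mathcal{L}$ to be the corresponding average of optimal per-component log-likelihoods so that the shapes match.

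Second, for the multi-task bound, I would use the assumed conditional independence $p(y, \mathbf{t} \mid F, w) = p(y \mid F, w)\, p(\mathbf{t} \mid F, w)$ to split
\begin{equation*}
\log p(y, \mathbf{t} \mid F, w) \;=\; \log p(y \mid F, w) + \log p(\mathbf{t} \mid F, w).
\end{equation*}
In the Faster-RCNN head the classification and regression parameters are disjoint blocks $w = (w_c, w_r)$ with $p(y \mid F, w) = p(y \mid F, w_c)$ and $p(\mathbf{t} \mid F, w) = p(\mathbf{t} \mid F, w_r)$, so the joint MLE decomposes as $w^* = (w_c^*, w_r^*)$ and $\mathcal{L}_{tot}(F, w^*) = \log p(y \mid F, w_c^*) + \log p(\mathbf{t} \mid F, w_r^*)$. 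Applying the first bound separately to each term and summing yields $\mathcal{L}_{tot}(F, w^*) \geq \logme_{class}(y, F) + \logme_{pos}(\mathbf{t}, F) = \totlogme(y, \mathbf{t}, F)$.

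The main obstacle, in my view, is not the core inequality (which is a textbook evidence-versus-MLE argument) but keeping the bookkeeping consistent across the class / coordinate decomposition: $\logme_{class}$ and $\logme_{pos}$ are averages of scalar $\logme$s with their own optimized $(\alpha, \beta)$, while $\mathcal{L}_{tot}$ is a single joint log-likelihood. The cleanest resolution is to define $\mathcal{L}$ as the same per-component average of optimal binary/univariate log-likelihoods so the bound holds term-by-term. A secondary subtlety worth flagging is the minor inconsistency between the paper's statement that $\logme_{pos}$ and $\logme_{class}$ are \emph{averaged} and the displayed formula in which they are \emph{summed}: the bound survives either convention, but the normalizing constant on the right-hand side must be stated in agreement with the chosen definition.
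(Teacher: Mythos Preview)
Your proposal is correct and follows essentially the same approach as the paper: both bound the marginal likelihood by the maximum likelihood via the averaging/integration argument, then use conditional independence to split $\mathcal{L}_{tot}$ additively and apply the single-task bound to each summand. Your presentation is in fact slightly more direct (the paper first establishes $p(y\mid F, w^*)\geq p(y\mid F)$ as a warm-up before redoing the same step in the $(\alpha,\beta)$-parametrized model) and is more explicit about the $w=(w_c,w_r)$ block structure and the averaging bookkeeping, but the underlying idea is identical.
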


\begin{proof}
We observe that: 
\begin{eqnarray}\label{eq:int_max}
   p(y | F, w^*) &\geq& p(y | F, w)   \nonumber \\
   \int p(w) dw \cdot p(y | F, w^*)&\geq& \int p(w) p(y|F,w) dw \nonumber \\
   p(y | F, w^*)&\geq& p(y|F).
\end{eqnarray}

Let us consider a linear head model $w$ and the graphical model in~\cite{you2021logme} 
parametrizing the linear head by $\alpha$ and the observations by $\beta$.
The parametrized likelihood is defined as 
$\mathcal{L}_{\alpha\beta}(F, w_{\alpha}, \beta) = \log p(y | F, w_{\alpha}, \beta)$.

The space of linear heads $w$ being larger than the parametrized one $w_\alpha$, 
we have the inequality 
$p(y | F, w^*) \geq p(y | F, w^*_{\alpha^*}, \beta^*)$, 
where $\alpha^*, \beta^* = \arg \max \limits_{\alpha, \beta} p(y | F, w_{\alpha}, \beta)$
and $w^*_{\alpha^*} = \arg \max \limits_{w_{\alpha^*}} p(y | F, w_{\alpha^*}, \beta^*)$.

Following the same reasoning as in (\ref{eq:int_max}), we have that 
$p(y | F, w^*_{\alpha^*}, \beta^*)\geq p(y|F, \alpha^*, \beta^*)$.

Thus $\mathcal{L}(F, w^*) \geq \mathcal{L}_{\alpha\beta}(F, w^*_{\alpha^*}, \beta^*) \geq \logme (y, F)$.

Assuming that object class $y$ and location $\mathbf{t}$ are independent conditionally on $w, F$,
\begin{eqnarray}
   \mathcal{L}_{tot}(F, w^*) &=& \log p(\mathbf{t} | F, w^*) + \log p(y | F, w^*) \\
   &\geq&   \logme_{pos}(\mathbf{t}, F) + \logme_{class}(y, F). \nonumber %
\end{eqnarray}

Thus $\mathcal{L}_{tot}(F, w^*) \geq \totlogme (y, \mathbf{t}, F)$.
\end{proof}

\section{Experiments}\label{sec:experiments}

As mentioned before, transferability metrics can be used for different tasks such as \textit{layer selection}, 
\textit{model selection}, and \textit{source and target selection}. Here we investigate:
\begin{enumerate}
    \item Source selection: Given pre-trained models on different source datasets the goal is to select the model that 
    will yield the best performance on the target dataset.
    \item Target selection: Given one pre-trained model and different target tasks the goal is to determine which 
    target task the model will transfer on with the best performance.
\end{enumerate}

For our experiments, we used both synthetic and real-life object-detection datasets and finetune solely the head of the Faster-RCNN (i.e backbone layers are frozen).
Everywhere in the following, except in subsection \ref*{subsec:layer}, local features are extracted from $f^{MS}$ and 
global features are extracted from the last layer of the FPN.
Overall we evaluate all TMs on $26$ tasks, including $5$ synthetic source selections, $5$ synthetic target selections, 
and $3$ real target selections, first applying TMs to global and then to local features.
For simplicity, in this section, we only show the best TM applied to the global features as a baseline,
named \textit{Best Global Metric} in the results, while showing all TMs for local features. Correlations marked with an asterisk (*) indicate non-significant p-values.

\subsection{Synthetic Datasets}
We first tested our method in a simple controlled setting by creating synthetic object-detection datasets\footnote{Adapting the code at \url{https://github.com/hukkelas/MNIST-ObjectDetection}} 
using the famous MNIST dataset~\cite{lecun1998gradient} and 
its related datasets FASHION MNIST~\cite{xiao2017fashion}, KMNIST~\cite{clanuwat2018deep}, EMNIST~\cite{cohen2017emnist} and USPS~\cite{hull1994database}.
We build $M=20$ transfer scenarios, considering each dataset in turn as the source $\mathcal{T}_S$ and transferring to all remaining datasets. 
This amounts to $5$ source selection tasks and $5$ target selection tasks. For each task, we compare the correlation between different 
TMs and performance for the different feature extraction methods presented in Section~\ref*{sec:tm_od}.

We train a Faster-RCNN with a ResNet backbone on each of the datasets for $30$ epochs to use as pre-trained models. Then for each pair of source-target datasets, 
we finetune the pre-trained model from the source task on the target task for only $5$ epochs to simulate transfer learning with a constrained training budget compared to the pre-trained model. 
In addition, as these synthetic tasks are relatively easy, convergence is obtained in a few epochs.
For each pair, we also extract object-level features of the target dataset using the pre-trained source model.

\begin{figure}[h]
    \includegraphics[width=0.95\columnwidth]{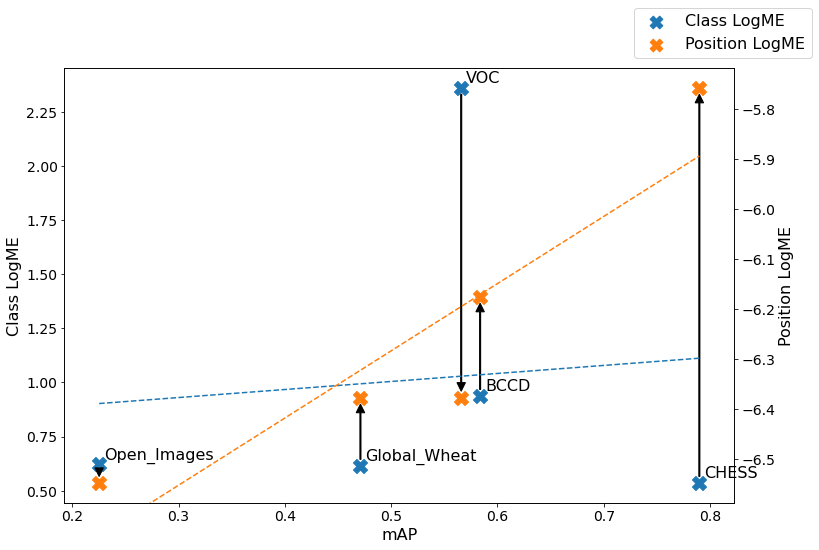}
    \caption{Comparison of $\logme_{class}$ and $\logme_{pos}$ on \textit{Real 1}.}
    \label{fig:posvsclass}
\end{figure}

\subsection{Real Datasets}
For real datasets, we only focus on the target selection task.
We build a first set (\textit{Real 1}) of target datasets with very different tasks: chess pieces, blood cells, wheat detection, VOC~\cite{Everingham10} 
and a subset of Open Images~\cite{kuznetsova2020open} with vehicles only. All datasets are publicly available. %
For each of these datasets, we use a Faster-RCNN pre-trained on COCO~\cite{lin2014microsoft} and transfer it to the target tasks, thus building one target selection task 
with $M=5$ transfer scenarios.
We also design a second set (\textit{Real 2}) of $20$ different target datasets by randomly sampling classes from the Open-Images dataset. To this end, we use the class co-occurrence matrix
to ensure the selection of images containing multiple object types. The co-occurence matrix counts the number of co-occurrences for any pair of classes, we filter out classes appearing in a small number of different images (see \ref*{subsec:reproducibility}). The first class is randomly selected while the four other classes are selected randomly but weighted by their co-occurrence count with the first class.
In this case, we thus build one target selection task with $M=20$ transfer scenarios.

We proceed as before by transferring the pre-trained Faster-RCNN to the different target tasks. For this experiment, we use both 
a ResNet backbone and a more recent Vision Transformer backbone, pre-trained on COCO and fine-tuned for $30$ epochs on $\mathcal{T}_T$. 
We also study the sensitivity of the TMs to the training budget, 
analyzing how the correlation between transferability metrics and performance evolves as the transfer proceeds from $5$ to $30$ epochs.

\subsection{Total LogME on local features efficiently measures transferability}

Table \ref{tab:summary} summarizes the correlation between mAP and transferability metrics for all the different tasks described above. 
Correlation results for synthetic data are averaged across the $5$ $\mathcal{T}_S$ for the target selection tasks and across the $5$ $\mathcal{T}_T$ 
for the source selection tasks.
We can observe in Table \ref{tab:summary} that computing TMs on local features is more effective than using global features for all tasks, especially on real datasets.

The large increase in overall correlation for all synthetic tasks when using local instead of global features is highlighted in Figure \ref{fig:correlation_agg},
where the correlation is computed on all $M=20$ transfer scenarios.
In addition, we observe that if regularized H-Score yields good results on synthetic datasets it fails on real-life tasks, while TransRate is inconsistent across these tasks. H-score exhibits an always near-zero and non-significant correlation, that could be tied to the instability of covariance estimation.
Over very different tasks, TLogME mitigates some of the limitations of LogME, especially when the regression task dominates the final performance. In Figure \ref{fig:posvsclass}, with traditional LogME, the CHESS dataset has a low transferability score but 
a high mAP. Indeed, the regression task is very easy as the pieces are disposed over a grid chess board. On the other side, VOC has very similar classes to the COCO dataset hence the high class-LogME. However, the complexity of the regression task makes it harder to have a high performance. %
Our proposed version of Total LogME outperforms the classical LogME in most cases for both local and global features ($18/26$ cases) as shown in the Supplementary Materials.
Overall, Total LogME on local features has a good correlation to mAP for real-life applications and a robust behavior overall, as illustrated in Figure \ref{fig:comparison_corr}.

\begin{table}[t]
    \centering
    \begin{tabular}{lrrrrrrrrrr}
        \toprule
        {} &       $\logme$ & $\totlogme$ & $\mathcal{H}$ & $\mathcal{H}_{\alpha}$ & $\transrate$ \\
        Epochs &                 &             &         &          &           \\
        \midrule
        5   &         0.60 &        0.59 &    0.38\textsuperscript{*} &     0.37\textsuperscript{ } &      0.36\textsuperscript{*} \\
        10  &         0.51 &        0.53 &    0.03\textsuperscript{*} &     0.02\textsuperscript{*} &      0.47\textsuperscript{ } \\
        20  &         0.53 &        0.54 &   -0.02\textsuperscript{*} &    -0.02\textsuperscript{*} &      0.52\textsuperscript{ } \\
        30  &         0.49 &        0.46 &    0.11\textsuperscript{*} &     0.13\textsuperscript{*} &      0.35\textsuperscript{*} \\
        \bottomrule
    \end{tabular}
\caption{Impact of training on correlation with a ViT backbone.}
\label{tab:training_vit}
\end{table}

\begin{table}[t]
    \centering
    \begin{tabular}{lrrrrrrrrrr}
        \toprule
        {} &      $\logme$ & $\totlogme$ & $\mathcal{H}$ & $\mathcal{H}_{\alpha}$ & $\transrate$ \\
        Epochs &                 &             &         &          &           \\
        \midrule
        5   &         0.46 &        0.60\textsuperscript{ } &    0.12\textsuperscript{*} &     0.09\textsuperscript{*} &      0.56 \\
        10  &        0.28 &        0.45\textsuperscript{ } &    0.07\textsuperscript{*} &     0.04\textsuperscript{*} &      0.44 \\
        20  &         0.20 &        0.36\textsuperscript{ } &   -0.09\textsuperscript{*} &    -0.13\textsuperscript{*} &      0.41 \\
        30  &        0.11 &        0.28\textsuperscript{*}  &   -0.18\textsuperscript{*} &    -0.23\textsuperscript{*} &      0.39 \\
        \bottomrule
    \end{tabular}
\caption{Impact of training on correlation with a ResNet backbone.}
\label{tab:training_resnet}
\end{table}

\subsection{Total LogME is less sensitive to training budget}

As transferability metrics aim at measuring the relationship between features extracted from a pre-trained model and target labels,
this relationship should weaken as the model is trained. To study the impact of training on the TMs we compute the correlation for 
model checkpoints at different steps in training for \textit{Real 2} datasets. We do this experiment with both a ResNet backbone and a Vision Transformer backbone. 

In tables~\ref{tab:training_vit} and \ref{tab:training_resnet}, we observe that compared to H-Score or TransRate, LogME and its extension are less sensitive to training 
with a slower and steadier decrease.  We also observe that the Faster-RCNN using a ViT backbone is more robust than the traditional architecture.

\begin{figure*}[ht!]
    \centering
    \begin{subfigure}[b]{0.24\textwidth}
        \captionsetup{justification=centering}
        \centering
        \includegraphics[width=\textwidth]{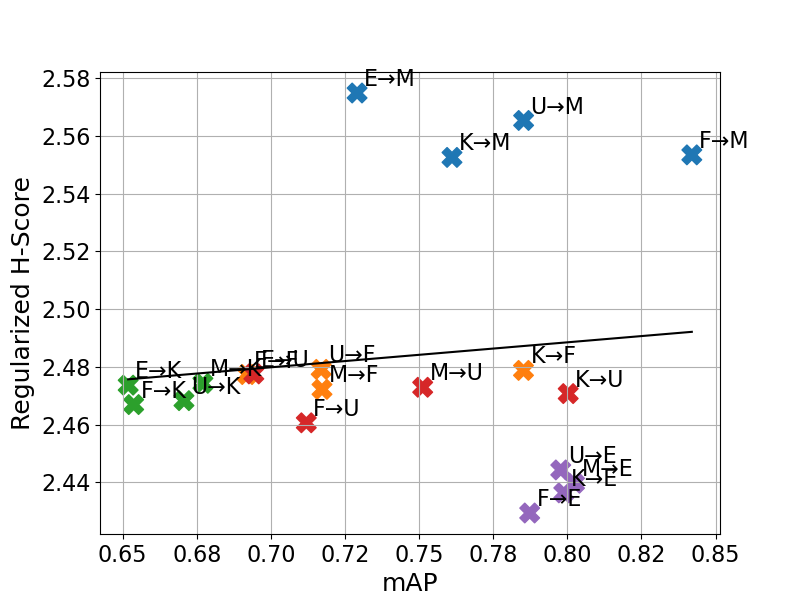}
        \caption{$\mathcal{H}_{\alpha}$ on global features  \\ $R=0.12^\ast$ \\ $R_s=-0.11^\ast$ \\ $\tau=0.01^\ast$ }
        \label{fig:synth_glob_hscore}
    \end{subfigure}
    \begin{subfigure}[b]{0.24\textwidth}
        \captionsetup{justification=centering}
        \centering
        \includegraphics[width=\textwidth]{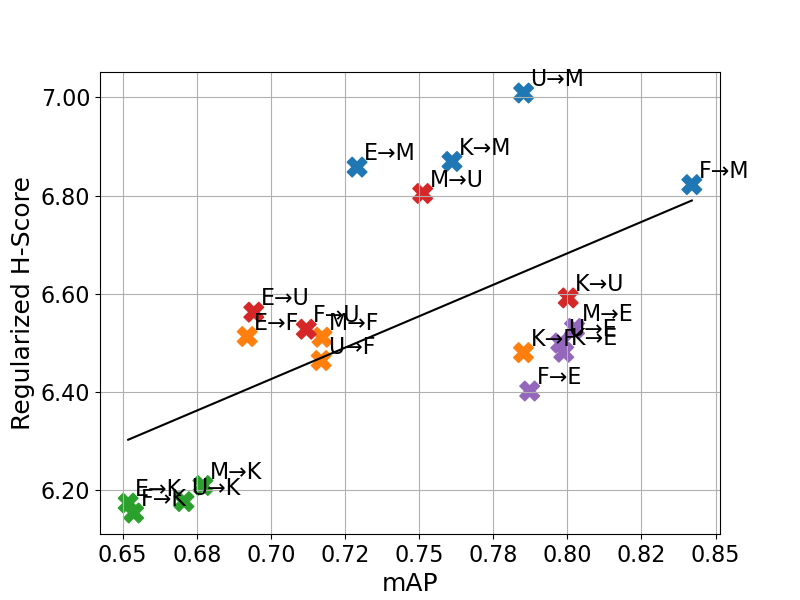}
        \caption{$\mathcal{H}_{\alpha}$ on local features\\ $R=0.60$ \\ $R_s=0.53$ \\ $\tau=0.41$ }
        \label{fig:synth_loc_hscore}
    \end{subfigure}
    \begin{subfigure}[b]{0.24\textwidth}
        \captionsetup{justification=centering}
        \centering
        \includegraphics[width=\textwidth]{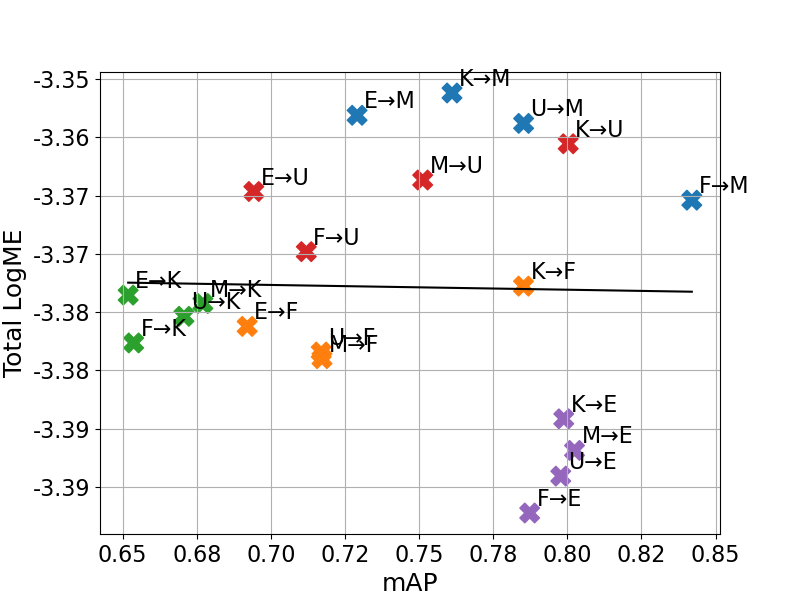}
        \caption{$\totlogme$ on global features \\ $R=-0.02^\ast$ \\ $R_s=-0.04^\ast$ \\ $\tau=-0.02^\ast$}
        \label{fig:synth_glob_tlogme}
    \end{subfigure}
    \begin{subfigure}[b]{0.24\textwidth}
        \captionsetup{justification=centering}
        \centering
        \includegraphics[width=\textwidth]{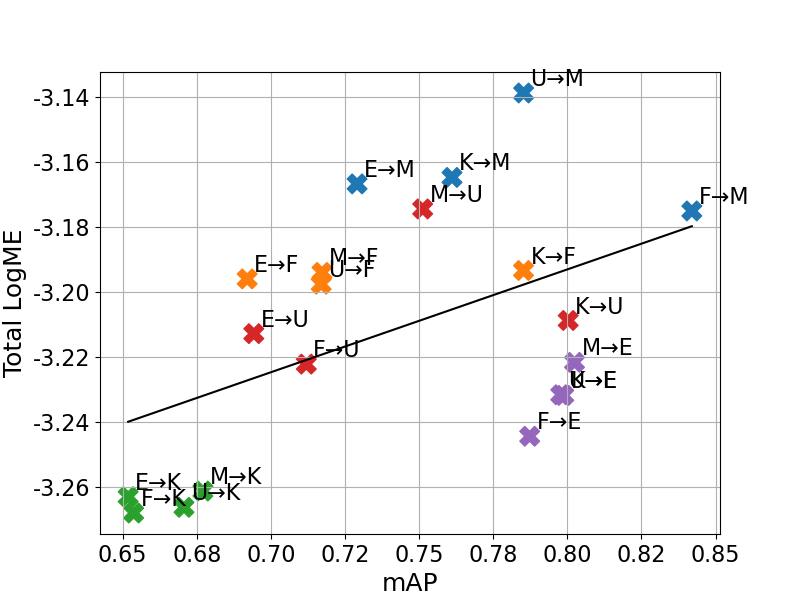}
        \caption{$\totlogme$ on local features   \\ $R=0.48$ \\ $R_s=0.41$ \\ $\tau=0.34^\ast$}
        \label{fig:synth_loc_tlogme}
    \end{subfigure}
    \caption{Overall correlation on all $M=20$ transfers for synthetic datasets. Points are colored according to the target dataset. $R$, $R_s$ and $\tau$ are respectively Pearson, Spearman and Kendall correlations.}
    \label{fig:correlation_agg}
\end{figure*}

\begin{figure*}[ht!]
    \centering
    \begin{subfigure}[b]{0.19\textwidth}
        \centering
        \captionsetup{justification=centering}
        \includegraphics[width=\textwidth]{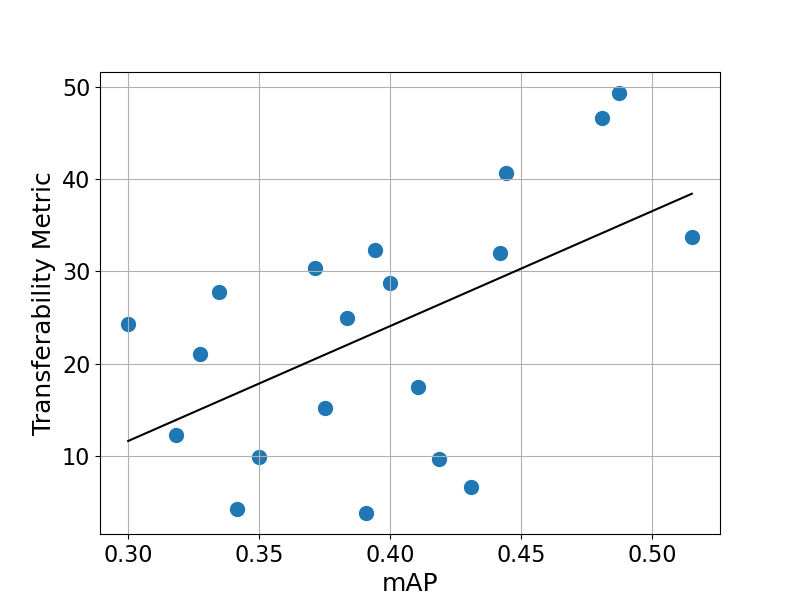}
        \caption{Best Global Metric \\ $R=0.54$ \\ $R_s=0.51$ \\ $\tau=0.36$ }
        \label{fig:baseline}
    \end{subfigure}
    \begin{subfigure}[b]{0.19\textwidth}
        \centering
        \captionsetup{justification=centering}
        \includegraphics[width=\textwidth]{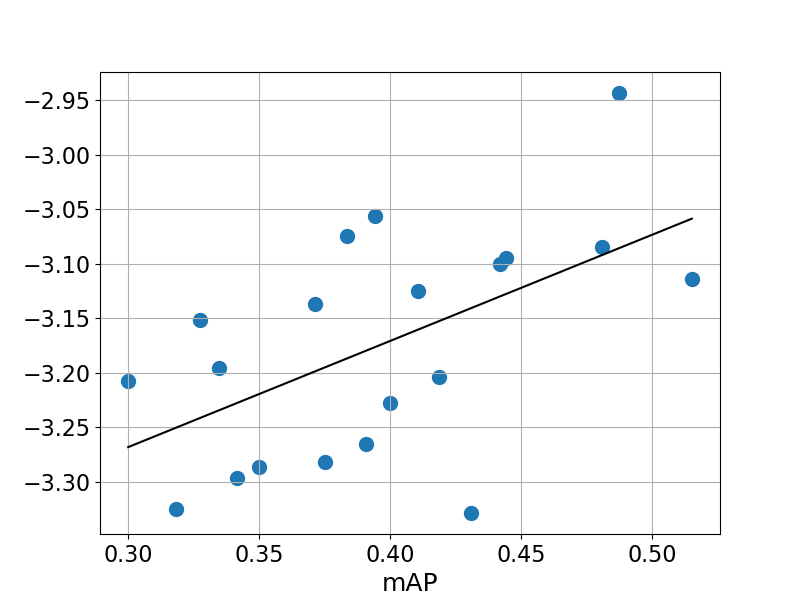}
        \caption{$\totlogme$ \\ $R=0.56$ \\ $R_s=0.52$ \\ $\tau=0.40$ }
        \label{fig:tlogme}
    \end{subfigure}
    \begin{subfigure}[b]{0.19\textwidth}
        \centering
        \captionsetup{justification=centering}
        \includegraphics[width=\textwidth]{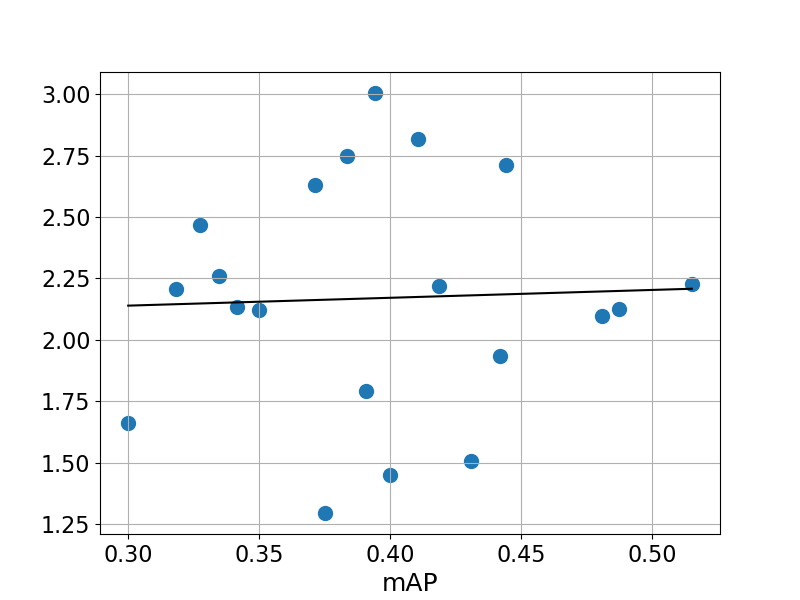}
        \caption{$\mathcal{H}$ \\ $R=0.04^\ast$ \\ $R_s=0.00^\ast$, \\ $\tau=0.03^\ast$ }
        \label{fig:hscore}
    \end{subfigure}
    \begin{subfigure}[b]{0.19\textwidth}
        \centering
        \captionsetup{justification=centering}
        \includegraphics[width=\textwidth]{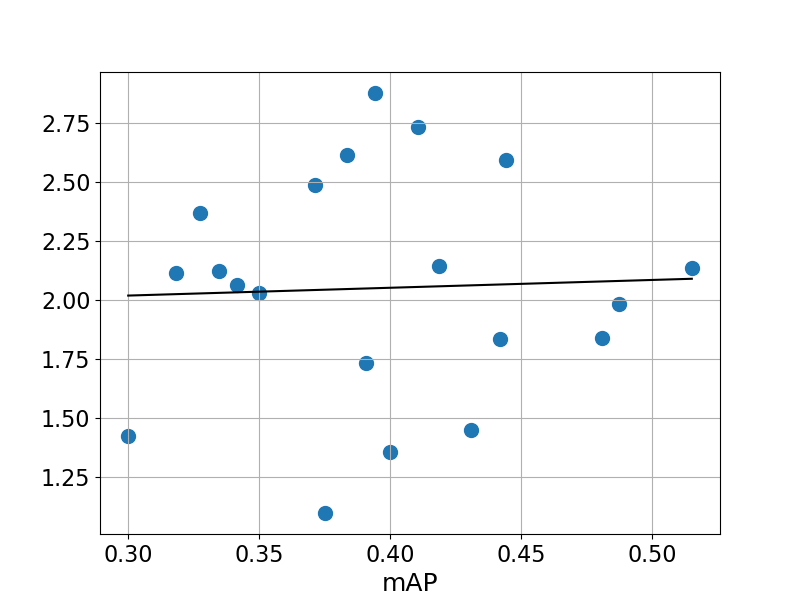}
        \caption{$\mathcal{H}_{\alpha}$ \\ $R=0.04^\ast$ \\ $R_s=0.03^\ast$ \\ $\tau=0.04^\ast$ }
        \label{fig:reg_hscore}
    \end{subfigure}
    \begin{subfigure}[b]{0.19\textwidth}
        \centering
        \captionsetup{justification=centering}
        \includegraphics[width=\textwidth]{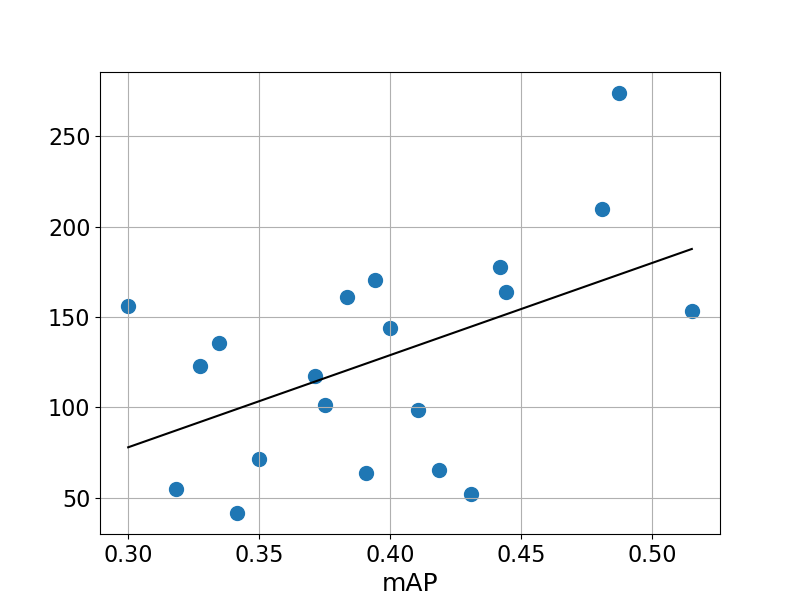}
        \caption{$\transrate$ \\ $R=0.50$ \\ $R_s=0.45$ \\ $\tau=0.29$ }
        \label{fig:transrate}
    \end{subfigure}
    \caption{Correlations between transferability metrics and mAP on \textit{Real 2} task with ViT-Faster-RCNN. Except for (a), all metrics are computed on local features. $\logme$ is omitted because having similar behavior to $\totlogme$.}
    \label{fig:comparison_corr}
\end{figure*}

\subsection{The local feature extractor method matters}
\label{subsec:layer}

Table \ref{tab:layer_oi_vit} shows the performance of the TMs when applied to features extracted from different layers.
We can observe that in all cases TMs have better behavior when using ROI-Align on FPN layers, or Multi-Scale ROI-Align, instead of using regular backbone layers, 
which would be the most natural choice.
Another important observation is that the local features from the penultimate layer $f^{FC}_{-1}$ are the least representative of the transferability measure.
This result is interesting because it confirms that the standard approach used to measure transferability in classification is less pertinent in OD.

\subsection{Reproducibility}\label{subsec:reproducibility}

The code used to generate synthetic datasets, prepare real datasets, train and evaluate models, extract global and local features, 
compute TMs, and evaluate their performance is available at this GitHub repository\footnote{\url{https://github.com/dataiku-research/transferability_metrics_for_object_detection}}.
More details on the experimental setup and datasets can be found in the Supplementary Materials.

\section{Conclusions}\label{sec:conclusion}

In this paper we presented the first extension of 
state-of-the-art transferability metrics to the object-detection problem.
We studied the generalization ability of Faster-RCNN object-detectors, with ResNet and Transformer backbone,
and proposed a method to extract object-level features to plug into existing
transferability metrics, initially designed for classification.
This approach is more relevant for object detection than using image-level representations 
from backbone layers and our study highlights the importance of object-level feature extraction 
for the quality of transferability metrics.

\begin{table}[t]
    \centering
    \begin{tabular}{lrrrrr}
        \toprule
        {} &$\logme$ & $\totlogme$& $\mathcal{H}$ & $\mathcal{H}_{\alpha}$ & $\transrate$ \\
        Layer       &       &             &         &          &           \\
        \midrule
        $f_5$     &  0.52 &        0.53 &   -0.08\textsuperscript{*} &     0.01\textsuperscript{*} &      0.45 \\
        $f^{MS}$      &  0.56 &        0.55 &    0.04\textsuperscript{*} &     0.04\textsuperscript{*} &      0.50 \\
        $f_{FPN, 0}$      &  0.58 &        0.61 &    0.00\textsuperscript{*} &     0.04\textsuperscript{*} &      0.57 \\
        $f_{FPN, 1}$       &  0.58 &        0.61 &   -0.00\textsuperscript{*} &     0.06\textsuperscript{*} &      0.58 \\
        $f_{FPN, 2}$       &  0.58 &        0.61 &    0.02\textsuperscript{*} &     0.08\textsuperscript{*} &      0.58 \\
        $f_{FPN, 3}$        &  0.57 &        0.62 &    0.05\textsuperscript{*} &     0.11\textsuperscript{*} &      0.58 \\
        $f^{FC}_{-1}$  &  0.51 &        0.48 &    0.08\textsuperscript{*} &     0.08\textsuperscript{*} &      0.36 \\
        \bottomrule
        \end{tabular}
\caption{Impact of layer choice on correlation.}
\label{tab:layer_oi_vit}
\end{table}

We also introduced the $\totlogme$ transferability metric, an extension of $\logme$~\cite{you2021logme} 
taking into account both object class and location prediction tasks. 
In our experiments, comparing several 
transferability metrics on real and synthetic object-detection transfer scenarios, our approach 
applying $\totlogme$ to object-level features is found to be a robust solution 
correlating with transfer performance, especially with a constrained 
training budget.

Exploring the extension of transferability metrics for other architectures, 
in particular testing our method with single-stage detectors, for which the proposed feature extraction is easily applicable, and including the region-proposal performance in the metric, will be the focus of our future work.

\newpage
\bibliographystyle{named}
\bibliography{main}

\begin{thebibliography}{}

\bibitem[\protect\citeauthoryear{Achille \bgroup \em et al.\egroup
  }{2019}]{achille2019task2vec}
Alessandro Achille, Michael Lam, Rahul Tewari, Avinash Ravichandran, Subhransu
  Maji, Charless~C Fowlkes, Stefano Soatto, and Pietro Perona.
\newblock Task2vec: Task embedding for meta-learning.
\newblock In {\em Proceedings of the IEEE/CVF international conference on
  computer vision}, pages 6430--6439, 2019.

\bibitem[\protect\citeauthoryear{Alvarez-Melis and
  Fusi}{2020}]{alvarez2020geometric}
David Alvarez-Melis and Nicolo Fusi.
\newblock Geometric dataset distances via optimal transport.
\newblock {\em Advances in Neural Information Processing Systems},
  33:21428--21439, 2020.

\bibitem[\protect\citeauthoryear{Bao \bgroup \em et al.\egroup
  }{2019}]{bao2019information}
Yajie Bao, Yang Li, Shao-Lun Huang, Lin Zhang, Lizhong Zheng, Amir Zamir, and
  Leonidas Guibas.
\newblock An information-theoretic approach to transferability in task transfer
  learning.
\newblock In {\em 2019 IEEE International Conference on Image Processing
  (ICIP)}, pages 2309--2313. IEEE, 2019.

\bibitem[\protect\citeauthoryear{Ben-David \bgroup \em et al.\egroup
  }{2006}]{ben2006analysis}
Shai Ben-David, John Blitzer, Koby Crammer, and Fernando Pereira.
\newblock Analysis of representations for domain adaptation.
\newblock {\em Advances in neural information processing systems}, 19, 2006.

\bibitem[\protect\citeauthoryear{Bochkovskiy \bgroup \em et al.\egroup
  }{2020}]{bochkovskiy2020yolov4}
Alexey Bochkovskiy, Chien-Yao Wang, and Hong-Yuan~Mark Liao.
\newblock Yolov4: Optimal speed and accuracy of object detection.
\newblock {\em arXiv preprint arXiv:2004.10934}, 2020.

\bibitem[\protect\citeauthoryear{Clanuwat \bgroup \em et al.\egroup
  }{2018}]{clanuwat2018deep}
Tarin Clanuwat, Mikel Bober-Irizar, Asanobu Kitamoto, Alex Lamb, Kazuaki
  Yamamoto, and David Ha.
\newblock Deep learning for classical japanese literature.
\newblock {\em arXiv preprint arXiv:1812.01718}, 2018.

\bibitem[\protect\citeauthoryear{Cohen \bgroup \em et al.\egroup
  }{2017}]{cohen2017emnist}
Gregory Cohen, Saeed Afshar, Jonathan Tapson, and Andre Van~Schaik.
\newblock Emnist: Extending mnist to handwritten letters.
\newblock In {\em 2017 international joint conference on neural networks
  (IJCNN)}, pages 2921--2926. IEEE, 2017.

\bibitem[\protect\citeauthoryear{Cui \bgroup \em et al.\egroup
  }{2018}]{cui2018large}
Yin Cui, Yang Song, Chen Sun, Andrew Howard, and Serge Belongie.
\newblock Large scale fine-grained categorization and domain-specific transfer
  learning.
\newblock In {\em Proceedings of the IEEE conference on computer vision and
  pattern recognition}, pages 4109--4118, 2018.

\bibitem[\protect\citeauthoryear{David \bgroup \em et al.\egroup
  }{2020}]{david2020global}
Etienne David, Simon Madec, Pouria Sadeghi-Tehran, Helge Aasen, Bangyou Zheng,
  Shouyang Liu, Norbert Kirchgessner, Goro Ishikawa, Koichi Nagasawa,
  Minhajul~A Badhon, et~al.
\newblock Global wheat head detection (gwhd) dataset: a large and diverse
  dataset of high-resolution rgb-labelled images to develop and benchmark wheat
  head detection methods.
\newblock {\em Plant Phenomics}, 2020, 2020.

\bibitem[\protect\citeauthoryear{Dwivedi and
  Roig}{2019}]{dwivedi2019representation}
Kshitij Dwivedi and Gemma Roig.
\newblock Representation similarity analysis for efficient task taxonomy \&
  transfer learning.
\newblock In {\em Proceedings of the IEEE/CVF Conference on Computer Vision and
  Pattern Recognition}, pages 12387--12396, 2019.

\bibitem[\protect\citeauthoryear{Everingham \bgroup \em et al.\egroup
  }{2010}]{Everingham10}
M.~Everingham, L.~Van~Gool, C.~K.~I. Williams, J.~Winn, and A.~Zisserman.
\newblock The pascal visual object classes (voc) challenge.
\newblock {\em International Journal of Computer Vision}, 88(2):303--338, June
  2010.

\bibitem[\protect\citeauthoryear{Garg \bgroup \em et al.\egroup
  }{2020}]{garg2020pretraining}
Siddhant Garg, Thuy Vu, and Alessandro Moschitti.
\newblock Tanda: Transfer and adapt pre-trained transformer models for answer
  sentence selection.
\newblock In {\em Proceedings of the AAAI Conference on Artificial
  Intelligence}, volume~34, pages 7780--7788, 2020.

\bibitem[\protect\citeauthoryear{Gretton \bgroup \em et al.\egroup
  }{2012}]{gretton2012kernel}
Arthur Gretton, Karsten~M Borgwardt, Malte~J Rasch, Bernhard Sch{\"o}lkopf, and
  Alexander Smola.
\newblock A kernel two-sample test.
\newblock {\em The Journal of Machine Learning Research}, 13(1):723--773, 2012.

\bibitem[\protect\citeauthoryear{Gururangan \bgroup \em et al.\egroup
  }{2020}]{gururangan2020pretraining}
Suchin Gururangan, Ana Marasovi{\'c}, Swabha Swayamdipta, Kyle Lo, Iz~Beltagy,
  Doug Downey, and Noah~A Smith.
\newblock Don’t stop pretraining: Adapt language models to domains and tasks.
\newblock In {\em Proceedings of the 58th Annual Meeting of the Association for
  Computational Linguistics}, pages 8342--8360, 2020.

\bibitem[\protect\citeauthoryear{He \bgroup \em et al.\egroup
  }{2016}]{he2016resnet}
Kaiming He, Xiangyu Zhang, Shaoqing Ren, and Jian Sun.
\newblock Deep residual learning for image recognition.
\newblock In {\em Proceedings of the IEEE conference on computer vision and
  pattern recognition}, pages 770--778, 2016.

\bibitem[\protect\citeauthoryear{He \bgroup \em et al.\egroup
  }{2017}]{he2017maskrcnn}
Kaiming He, Georgia Gkioxari, Piotr Doll{\'a}r, and Ross Girshick.
\newblock Mask r-cnn.
\newblock In {\em Proceedings of the IEEE international conference on computer
  vision}, pages 2961--2969, 2017.

\bibitem[\protect\citeauthoryear{Huang \bgroup \em et al.\egroup
  }{2022}]{huang2022frustratingly}
Long-Kai Huang, Junzhou Huang, Yu~Rong, Qiang Yang, and Ying Wei.
\newblock Frustratingly easy transferability estimation.
\newblock In {\em International Conference on Machine Learning}, pages
  9201--9225. PMLR, 2022.

\bibitem[\protect\citeauthoryear{Hull}{1994}]{hull1994database}
Jonathan~J. Hull.
\newblock A database for handwritten text recognition research.
\newblock {\em IEEE Transactions on pattern analysis and machine intelligence},
  16(5):550--554, 1994.

\bibitem[\protect\citeauthoryear{Ibrahim \bgroup \em et al.\egroup
  }{2021}]{ibrahim2021newer}
Shibal Ibrahim, Natalia Ponomareva, and Rahul Mazumder.
\newblock Newer is not always better: Rethinking transferability metrics, their
  peculiarities, stability and performance.
\newblock In {\em NeurIPS 2021 Workshop on Distribution Shifts: Connecting
  Methods and Applications}, 2021.

\bibitem[\protect\citeauthoryear{Kuznetsova \bgroup \em et al.\egroup
  }{2020}]{kuznetsova2020open}
Alina Kuznetsova, Hassan Rom, Neil Alldrin, Jasper Uijlings, Ivan Krasin, Jordi
  Pont-Tuset, Shahab Kamali, Stefan Popov, Matteo Malloci, Alexander
  Kolesnikov, et~al.
\newblock The open images dataset v4.
\newblock {\em International Journal of Computer Vision}, 128(7):1956--1981,
  2020.

\bibitem[\protect\citeauthoryear{LeCun \bgroup \em et al.\egroup
  }{1998}]{lecun1998gradient}
Yann LeCun, L{\'e}on Bottou, Yoshua Bengio, and Patrick Haffner.
\newblock Gradient-based learning applied to document recognition.
\newblock {\em Proceedings of the IEEE}, 86(11):2278--2324, 1998.

\bibitem[\protect\citeauthoryear{Li \bgroup \em et al.\egroup
  }{2021a}]{li2021ranking}
Yandong Li, Xuhui Jia, Ruoxin Sang, Yukun Zhu, Bradley Green, Liqiang Wang, and
  Boqing Gong.
\newblock Ranking neural checkpoints.
\newblock In {\em Proceedings of the IEEE/CVF Conference on Computer Vision and
  Pattern Recognition}, pages 2663--2673, 2021.

\bibitem[\protect\citeauthoryear{Li \bgroup \em et al.\egroup
  }{2021b}]{li2021benchmarking}
Yanghao Li, Saining Xie, Xinlei Chen, Piotr Dollar, Kaiming He, and Ross
  Girshick.
\newblock Benchmarking detection transfer learning with vision transformers.
\newblock {\em arXiv preprint arXiv:2111.11429}, 2021.

\bibitem[\protect\citeauthoryear{Lin \bgroup \em et al.\egroup
  }{2014}]{lin2014microsoft}
Tsung-Yi Lin, Michael Maire, Serge Belongie, James Hays, Pietro Perona, Deva
  Ramanan, Piotr Doll{\'a}r, and C~Lawrence Zitnick.
\newblock Microsoft coco: Common objects in context.
\newblock In {\em European conference on computer vision}, pages 740--755.
  Springer, 2014.

\bibitem[\protect\citeauthoryear{Lin \bgroup \em et al.\egroup
  }{2017}]{lin2017feature}
Tsung-Yi Lin, Piotr Doll{\'a}r, Ross Girshick, Kaiming He, Bharath Hariharan,
  and Serge Belongie.
\newblock Feature pyramid networks for object detection.
\newblock In {\em Proceedings of the IEEE conference on computer vision and
  pattern recognition}, pages 2117--2125, 2017.

\bibitem[\protect\citeauthoryear{Liu \bgroup \em et al.\egroup
  }{2021}]{liu2021swin}
Ze~Liu, Yutong Lin, Yue Cao, Han Hu, Yixuan Wei, Zheng Zhang, Stephen Lin, and
  Baining Guo.
\newblock Swin transformer: Hierarchical vision transformer using shifted
  windows.
\newblock pages 10012--10022, 2021.

\bibitem[\protect\citeauthoryear{Marcel and
  Rodriguez}{2010}]{marcel2010torchvision}
S{\'e}bastien Marcel and Yann Rodriguez.
\newblock Torchvision the machine-vision package of torch.
\newblock In {\em Proceedings of the 18th ACM international conference on
  Multimedia}, pages 1485--1488, 2010.

\bibitem[\protect\citeauthoryear{Nguyen \bgroup \em et al.\egroup
  }{2020}]{nguyen2020leep}
Cuong Nguyen, Tal Hassner, Matthias Seeger, and Cedric Archambeau.
\newblock Leep: A new measure to evaluate transferability of learned
  representations.
\newblock In {\em International Conference on Machine Learning}, pages
  7294--7305. PMLR, 2020.

\bibitem[\protect\citeauthoryear{Paszke \bgroup \em et al.\egroup
  }{2019}]{paszke2019pytorch}
Adam Paszke, Sam Gross, Francisco Massa, Adam Lerer, James Bradbury, Gregory
  Chanan, Trevor Killeen, Zeming Lin, Natalia Gimelshein, Luca Antiga, et~al.
\newblock Pytorch: An imperative style, high-performance deep learning library.
\newblock {\em Advances in neural information processing systems}, 32, 2019.

\bibitem[\protect\citeauthoryear{Redmon \bgroup \em et al.\egroup
  }{2016}]{redmon2016yolo}
Joseph Redmon, Santosh Divvala, Ross Girshick, and Ali Farhadi.
\newblock You only look once: Unified, real-time object detection.
\newblock In {\em Proceedings of the IEEE conference on computer vision and
  pattern recognition}, pages 779--788, 2016.

\bibitem[\protect\citeauthoryear{Ren \bgroup \em et al.\egroup
  }{2015}]{ren2015fasterrcnn}
Shaoqing Ren, Kaiming He, Ross Girshick, and Jian Sun.
\newblock Faster r-cnn: Towards real-time object detection with region proposal
  networks.
\newblock volume~28, 2015.

\bibitem[\protect\citeauthoryear{Shenggan}{2017}]{BCCD_Dataset}
Shenggan.
\newblock Bccd dataset, 2017.

\bibitem[\protect\citeauthoryear{Song \bgroup \em et al.\egroup
  }{2020}]{song2020depara}
Jie Song, Yixin Chen, Jingwen Ye, Xinchao Wang, Chengchao Shen, Feng Mao, and
  Mingli Song.
\newblock Depara: Deep attribution graph for deep knowledge transferability.
\newblock In {\em Proceedings of the IEEE/CVF Conference on Computer Vision and
  Pattern Recognition}, pages 3922--3930, 2020.

\bibitem[\protect\citeauthoryear{Tan \bgroup \em et al.\egroup
  }{2021}]{tan2021otce}
Yang Tan, Yang Li, and Shao-Lun Huang.
\newblock Otce: A transferability metric for cross-domain cross-task
  representations.
\newblock In {\em Proceedings of the IEEE/CVF Conference on Computer Vision and
  Pattern Recognition}, pages 15779--15788, 2021.

\bibitem[\protect\citeauthoryear{Thrun and
  Pratt}{1998}]{thrun1998transferlearning}
Sebastian Thrun and Lorien Pratt.
\newblock Learning to learn: Introduction and overview.
\newblock In {\em Learning to learn}, pages 3--17. Springer, 1998.

\bibitem[\protect\citeauthoryear{Tran \bgroup \em et al.\egroup
  }{2019}]{tran2019transferability}
Anh~T Tran, Cuong~V Nguyen, and Tal Hassner.
\newblock Transferability and hardness of supervised classification tasks.
\newblock In {\em Proceedings of the IEEE/CVF International Conference on
  Computer Vision}, pages 1395--1405, 2019.

\bibitem[\protect\citeauthoryear{Xiao \bgroup \em et al.\egroup
  }{2017}]{xiao2017fashion}
Han Xiao, Kashif Rasul, and Roland Vollgraf.
\newblock Fashion-mnist: a novel image dataset for benchmarking machine
  learning algorithms.
\newblock {\em arXiv preprint arXiv:1708.07747}, 2017.

\bibitem[\protect\citeauthoryear{Xuhong \bgroup \em et al.\egroup
  }{2018}]{li2018inductive}
LI~Xuhong, Yves Grandvalet, and Franck Davoine.
\newblock Explicit inductive bias for transfer learning with convolutional
  networks.
\newblock In {\em International Conference on Machine Learning}, pages
  2825--2834. PMLR, 2018.

\bibitem[\protect\citeauthoryear{Yosinski \bgroup \em et al.\egroup
  }{2014}]{yosinski2014transferability}
Jason Yosinski, Jeff Clune, Yoshua Bengio, and Hod Lipson.
\newblock How transferable are features in deep neural networks?
\newblock volume~27, 2014.

\bibitem[\protect\citeauthoryear{You \bgroup \em et al.\egroup
  }{2021}]{you2021logme}
Kaichao You, Yong Liu, Jianmin Wang, and Mingsheng Long.
\newblock Logme: Practical assessment of pre-trained models for transfer
  learning.
\newblock In {\em International Conference on Machine Learning}, pages
  12133--12143. PMLR, 2021.

\bibitem[\protect\citeauthoryear{Zaidi \bgroup \em et al.\egroup
  }{2022}]{zaidi2022survey}
Syed Sahil~Abbas Zaidi, Mohammad~Samar Ansari, Asra Aslam, Nadia Kanwal,
  Mamoona Asghar, and Brian Lee.
\newblock A survey of modern deep learning based object detection models.
\newblock {\em Digital Signal Processing}, page 103514, 2022.

\bibitem[\protect\citeauthoryear{Zamir \bgroup \em et al.\egroup
  }{2018}]{zamir2018taskonomy}
Amir~R Zamir, Alexander Sax, William Shen, Leonidas~J Guibas, Jitendra Malik,
  and Silvio Savarese.
\newblock Taskonomy: Disentangling task transfer learning.
\newblock In {\em Proceedings of the IEEE conference on computer vision and
  pattern recognition}, pages 3712--3722, 2018.

\end{thebibliography}

\newpage
\onecolumn
\section{Supplementary materials}\label{sec:supp_mat}
\subsection{Object-Detection Datasets}
\subsubsection{Synthetic Data} 
Detailed code for generating synthetic data can be found in our repository. MNIST, KMNIST, FMNIST, and USPS have 10 classes each and EMNIST has 26 classes.
For each dataset, we have a repartition of 1000/200 train/test samples. Images are of size 224x224.
\subsubsection{Real Data} 
\begin{enumerate}
    \item BCCD\footnote{\url{https://www.tensorflow.org/datasets/catalog/bccd}} \cite{BCCD_Dataset}:  The dataset has 364 color images of size 480x660 with a 277/87 train/test split and 10 classes.
    \item CHESS\footnote{\url{https://public.roboflow.com/object-detection/chess-full}}  : The dataset has 693 color images of size 416x416 and 12 classes. We sample 4/5 of the images for training and the rest for testing.
    \item Global Wheat\footnote{\url{https://www.kaggle.com/c/global-wheat-detection}} \cite{david2020global} : The dataset has 3373 color images of size 1024*1024 and only one class. We sample 4/5 of the images for training and the rest for testing.
    \item VOC\footnote{\url{http://host.robots.ox.ac.uk/pascal/VOC/}} \cite{Everingham10}: The dataset has 5717/5823 train/test images of multiple sizes with 20 different classes.
    \item Vehicle\footnote{\url{https://storage.googleapis.com/openimages/web/index.html}} : The dataset is a subset of OpenImages \cite{kuznetsova2020open} with 13 classes of vehicles only and 166k images of mutliple sizes. The code to download this dataset is available in our repository. 
    We sample 10000 images for training and 2000 for testing.
    \item Bootstrapped Datasets: The code to create the bootstrapped datasets from OpenImages is in our repository. Each dataset has a repartition of 1000/200 train/test samples and 5 classes.
\end{enumerate}
\subsection{General details}
All our models and training code are implemented in PyTorch \cite{paszke2019pytorch}. Both architectures of Faster-RCNN with ResNet and ViT backbones come from torchvision \cite{marcel2010torchvision}. We also use weights from torchvision
for ResNet backbone trained on ImageNet and for Faster-RCNN trained on the COCO dataset. During the transfer, we only retrain the head of the architecture while freezing the backbones. 
However, the Feature Pyramid network is always retrained even if part of the backbone in the PyTorch implementation.
During training, all images are resized to the size of (800x800) which is the minimal size in the PyTorch implementation of Faster-RCNN. We use reference scripts from torchvision to achieve training of the models.
We extract local features using the same parameters of ROI-Align as used in the PyTorch implementation of Faster-RCNN. Then we reduce the spatial dimension to one
with a final average pooling. Our implementations of $\transrate$ and $\mathcal{H}_{\alpha}$ have been merged in the Transfer Learning Library\footnote{\url{https://github.com/thuml/Transfer-Learning-Library}} along with their implemenations of $\logme$ and $\mathcal{H}$.
\subsection{Experiment details on synthetic datasets}
On synthetic datasets, we simulate \textit{pre-trained models} by training a Faster-RCNN
 with a backbone pre-trained on ImageNet on 30 epochs. We use a two-step fine-tuning approach by first training the head only on 10 epochs with a learning rate of 10e-4. 
 Then we train the head jointly with the backbone on 20 epochs and with a learning rate divided by 10. We use an Adam Optimizer on batches of 4 with a Reduce on Plateau callback.
 Tables \ref*{tab:source_synth_detailed} and \ref*{tab:target_synth_detailed} contain detailed results of the experiments.
\begin{table*}
    \centering
    \begin{tabular}{lc|rrrrr|rrrrr}
        
        \toprule
        {} &    &  \multicolumn{5}{c|}{Global Features} & \multicolumn{5}{c}{Local Features}\\
        \midrule
        {} &   &   $\logme$ & $\totlogme$ & $\mathcal{H}$ & $\mathcal{H}_{\alpha}$ & $\transrate$ & LogME & TLogME & $\mathcal{H}$ & $\mathcal{H}_{\alpha}$ & $\transrate$ \\
         $\mathcal{T}_T$ &  Corr &         &             &         &          &           &       &             &         &          &           \\
        \midrule
        \multirow{3}{*}{MNIST} & $R$ &      -0.39 &       -0.80 &    \textbf{0.99} &    -0.38 &     -0.53 & -0.18 &       -0.13 &    0.95 &    -0.15 &     -0.20 \\
        & $R_s$ &    -0.40 &        -0.40 &     1.00 &      0.00 &      -0.80 &  -0.40 &        -0.40 &     0.80 &     -0.40 &      -0.40 \\
        & $\tau$& -0.33 &       -0.33 &    1.00 &     0.00 &     -0.67 & -0.33 &       -0.33 &    0.67 &    -0.33 &     -0.33 \\
        \midrule
        \multirow{3}{*}{KMNIST} & $R$  &       \textbf{0.98} &        0.13 &    0.15 &    0.93 &     -0.89 &  0.60 &        0.90 &   -0.47 &     0.73 &      0.86 \\
        & $R_s$&   0.80 &        -0.40 &     0.40 &      0.60 &      -1.00 &   0.60 &         0.60 &     0.20 &     0.80 &      0.89 \\
        & $\tau$& 0.67 &       -0.33 &    0.33 &     0.33 &     -1.00 &  0.33 &        0.33 &    0.00 &     0.67 &      0.67 \\
        \midrule
        \multirow{3}{*}{EMNIST} & $R$   &       0.29 &        0.74 &    0.73 &     0.47 &     -0.68 &  \textbf{1.00} &        0.84 &    0.96 &     0.96 &      0.59 \\
        & $R_s$&    0.20 &         0.80 &     0.80 &      0.00 &      -0.40 &   1.00 &         0.8 &     1.00 &      1.00 &       0.40 \\
        & $\tau$& 0.00 &        0.67 &    0.67 &     0.00 &     -0.33 &  1.00 &        0.67 &    1.00 &     1.00 &      0.33 \\
        \midrule
        \multirow{3}{*}{FMNIST} & $R$ &      -0.27 &        0.76 &    0.19 &     \textbf{0.78} &     -0.11 &  0.77 &        0.53 &    0.34 &     0.59 &     -0.40 \\
        & $R_s$&    0.00 &         0.20 &     0.20 &      0.40 &      -0.40 &   0.80 &         0.80 &     0.40 &      0.80 &      -0.40 \\
        & $\tau$&   0.00 &        0.00 &    0.00 &     0.33 &     -0.33 &  0.67 &        0.67 &    0.33 &     0.67 &     -0.33 \\
        \midrule
        \multirow{3}{*}{USPS} & $R$ &       0.26 &        \textbf{0.74} &    0.05 &     0.12 &     -0.32 &  0.31 &        0.33 &    0.01 &     0.30 &      0.22 \\
        & $R_s$&  0.60 &         0.80 &     0.20 &      0.00 &      -0.40 &   0.60 &         0.60 &     0.40 &      0.60 &       0.00 \\
        & $\tau$&  0.33 &        0.67 &    0.00 &     0.00 &     -0.33 &  0.33 &        0.33 &    0.33 &     0.33 &      0.00 \\
        \midrule
        \midrule
        \multirow{3}{*}{\textbf{Mean}} & $R$  &  0.17 & 0.31  & 0.42 & 0.38& -0.50 &   \textbf{0.50} &       \textbf{0.50} &    0.36 &     0.49 &      0.21 \\
        & $R_s$& 0.24 &        0.20 &     0.52 &     0.20 &      -0.60 &  0.52 &        0.48 &     0.56 &     0.56 &      0.08 \\
        & $\tau$& 0.13 &        0.14 &     0.40 &     0.13 &      -0.53 &  0.40 &        0.33 &     0.47 &     0.47 &      0.07 \\
        \bottomrule
        \end{tabular}
    \caption{Source task selection on synthetic datasets}
    \label{tab:source_synth_detailed}
\end{table*}
\begin{table*}
    \centering
    \begin{tabular}{lc|rrrrr|rrrrr}
        \toprule
        {} &    &  \multicolumn{5}{c|}{Global Features} & \multicolumn{5}{c}{Local Features}\\
        \midrule
        {} &   &   $\logme$ & $\totlogme$ & $\mathcal{H}$ & $\mathcal{H}_{\alpha}$ & $\transrate$ & LogME & TLogME & $\mathcal{H}$ & $\mathcal{H}_{\alpha}$ & $\transrate$ \\
         $\mathcal{T}_S$ &  Corr &         &             &         &          &           &       &             &         &          &           \\
        \midrule
        \multirow{3}{*}{MNIST} & $R$ &       -0.62 &       -0.39 &   -0.41 &     0.44 &     -0.53 &  0.22 &        0.30 &   -0.63 &    \textbf{0.52} &     -0.27 \\
        & $R_s$ &    -0.40 &        -0.40 &    -0.40 &      0.40 &      -0.80 &   \textbf{0.40} &         \textbf{0.40} &    -0.40 &      \textbf{0.40} &      -0.40 \\
        & $\tau$& -0.33 &       -0.33 &   -0.33 &     0.33 &     -0.67 &  \textbf{0.33} &        \textbf{0.33} &   -0.33 &     \textbf{0.33} &     -0.33 \\
        \midrule
        \multirow{3}{*}{KMNIST} & $R$  &     -0.57 &       -0.57 &   -0.61 &    -0.93 &     -0.02 & -0.83 &       -0.90 &   -0.60 &    -0.97 &     -0.82 \\
        & $R_s$ &       -0.60 &        -0.40 &    -0.40 &     -0.80 &       0.00 &  -0.80 &        -0.80 &    -0.40 &     -0.80 &      -0.80 \\
        & $\tau$ & -0.33 &       -0.33 &   -0.33 &    -0.67 &      0.00 & -0.67 &       -0.67 &   -0.33 &    -0.67 &     -0.67 \\
        \midrule
        \multirow{3}{*}{EMNIST} & $R$   &     0.88 &        0.71 &    0.68 &     0.99 &     -0.28 &  0.97 &        \textbf{1.00} &    0.49 &     0.98 &      0.86 \\
        & $R_s$ &    0.80 &         0.80 &     0.80 &      0.80 &      -0.80 &   0.80 &         0.80 &     0.80 &      0.80&      0.80\\
        & $\tau$ &  0.67 &        0.67 &    0.67 &     0.67 &     -0.67 &  0.67 &        0.67 &    0.67 &     0.67 &      0.67 \\
        \midrule
        \multirow{3}{*}{FMNIST} & $R$ &    0.04 &        0.09 &    0.17 &     0.89 &     -0.75 &  0.77 &        0.81 &   -0.16 &     \textbf{0.91} &      0.61 \\
        & $R_s$ &    0.40 &         0.40 &     0.40 &      0.80 &      -0.40 &   0.80 &         0.80 &     0.40 &      0.80 &       0.4 \\
        & $\tau$ &    0.33 &        0.33 &    0.33 &     0.67 &     -0.33 &  0.67 &        0.67 &    0.33 &     0.67 &      0.33 \\
        \midrule
        \multirow{3}{*}{USPS} & $R$ &      -0.29 &       -0.02 &   -0.02 &     \textbf{0.65} &     -0.61 &  0.46 &        0.52 &   -0.20 &     \textbf{0.65} &      0.20 \\
        & $R_s$ &   -0.20 &        -0.40 &    -0.40 &      0.40 &      -0.60 &   0.40 &         0.40 &    -0.40 &      0.40 &      -0.20 \\
        & $\tau$ &  0.00 &       -0.33 &   -0.33 &     0.33 &     -0.33 &  0.33 &        0.33 &   -0.33 &     0.33 &      0.00 \\
        \midrule
        \midrule
        \multirow{3}{*}{\textbf{Mean}} & $R$  &  -0.11 & -0.04 & -0.04 &  0.41 & -0.43 &   0.32 &        0.35 &    -0.22 &     \textbf{0.42} &      0.12 \\
        & $R_s$ &          0.00 &     0.00 &     0.00 &      0.32 &  -0.52 &        0.32 &     0.32 &    0.00 &      0.32  & -0.04\\
        & $\tau$ & 0.07 &        0.00 &     0.00 &     0.27 &      -0.40 &  0.27 &   0.27 &    0.00 &     0.27 &     0.00 \\
        \bottomrule
        \end{tabular}
    \caption{Target task selection on synthetic datasets}
    \label{tab:target_synth_detailed}
\end{table*}
\subsection{Experiment details on real datasets}
On real datasets we use a Faster-RCNN pre-trained on COCO and train its head only on 30 epochs on the target datasets. We once again use an Adam Optimizer with a learning rate of 10e-4 and Reduce on Plateau callback.
For ResNet backbone, batches are of size 8 and for ViT backbone, batches are of size 6 because of the increased size of the model.  Table \ref*{tab:real_detailed} contains the detailed results of these experiments.
\begin{table*}[h!]
    \centering
    \begin{tabular}{lcc|rrrrr|rrrrr}
        \toprule
        {} &  &    &  \multicolumn{5}{c|}{Global Features} & \multicolumn{5}{c}{Local Features}\\
        \midrule
        {} &   &   &  $\logme$ & $\totlogme$ & $\mathcal{H}$ & $\mathcal{H}_{\alpha}$ & $\transrate$ & LogME & TLogME & $\mathcal{H}$ & $\mathcal{H}_{\alpha}$ & $\transrate$ \\
         $\mathcal{T}_S$ & Backbone &  Corr &         &             &         &          &           &       &             &         &          &           \\
        \midrule
        \multirow{3}{*}{Real 1} & \multirow{3}{*}{ResNET} & $R$ &   -0.00 &       -0.01 &   -0.58 &    -0.54 &     -0.82 &  0.10 &        \textbf{0.47} &   -0.01 &    -0.03 &     -0.68 \\
        & & $R_s$ &   -0.60 &        -0.60 &    -0.50 &     -0.50 &      -0.60 &  -0.20 &         0.70 &    -0.10 &     -0.10 &      -0.60 \\
        &  &$\tau$&  -0.40 &        -0.40 &    -0.20 &     -0.20 &      -0.40 &  -0.20 &         0.60 &     0.00 &      0.00 &      -0.40 \\
        \midrule
        \multirow{3}{*}{Real 2} & \multirow{3}{*}{ResNET} & $R$ &  0.28 &        0.33 &    0.14 &     0.14 &      0.20 &  0.15 &        0.31 &   -0.20 &    -0.24 &      \textbf{0.43} \\
        & & $R_s$ &    0.13 &        0.23 &    0.07 &     0.08 &      0.09 &  0.04 &        0.22 &   -0.22 &    -0.26 &      0.31 \\
        &  &$\tau$& 0.08 &        0.16 &    0.03 &     0.05 &      0.08 &  0.04 &        0.17 &   -0.16 &    -0.21 &      0.21 \\
        \midrule
        \multirow{3}{*}{Real 2} & \multirow{3}{*}{ViT} & $R$ & 0.53 &        0.52 &    0.13 &     0.22 &      0.54 &  \textbf{0.56} &        \textbf{0.56} &    0.04 &     0.04 &      0.50 \\
        & & $R_s$ &     0.51 &        0.51 &    0.06 &     0.18 &      0.51 &  0.52 &        0.52 &   -0.00 &     0.03 &      0.45 \\
        &  &$\tau$& 0.36 &        0.35 &    0.03 &     0.14 &      0.36 &  0.36 &        0.40 &    0.03 &     0.04 &      0.29 \\
        \midrule
         \bottomrule
        \end{tabular}
    \caption{Target task selection on real datasets}
    \label{tab:real_detailed}
\end{table*}

\end{document}